\newtheorem{prop}{Lemma}
\newtheorem{thm}{Theorem}
\newcommand{\mypar}[1]{\textbf{#1}}
\renewcommand{\todo}[1]{}
\crefname{equation}{equation}{equations}
\Crefname{equation}{Equation}{Equations}% For beginning \Cref
\title{Theta-RBM: Unfactored Gated Restricted Boltzmann Machine for Rotation-Invariant Representations}
\author{
   Mario Valerio Giuffrida\\
   PRIAn Research Unit\\
   IMT School Advanced Studies \\
   Lucca (LU), Italy\\
   \texttt{valerio.giuffrida@imtlucca.it} \\
   \And
   Sotirios A. Tsaftaris\\
   School of Engineering\\
   University of Edinburgh\\
   Edinburgh, UK\\
   \texttt{s.tsaftaris@ed.ac.uk}
  %% \AND
  %% Coauthor \\
  %% Affiliation \\
  %% Address \\
  %% \texttt{email} \\
  %% \And
  %% Coauthor \\
  %% Affiliation \\
  %% Address \\
  %% \texttt{email} \\
  %% \And
  %% Coauthor \\
  %% Affiliation \\
  %% Address \\
  %% \texttt{email} \\
}
\begin{document}
% \nipsfinalcopy is no longer used

\maketitle

\begin{abstract}
Learning invariant representations is a critical task in computer vision. In this paper, we propose the Theta-Restricted Boltzmann Machine ($\theta$-RBM in short), which builds upon the original RBM formulation and injects the notion of rotation-invariance during the learning procedure. In contrast to previous approaches, we do not transform the training set with all  possible rotations. Instead, we rotate the gradient filters when they are computed during the \textit{Contrastive Divergence} algorithm. We formulate our model as an unfactored gated Boltzmann machine, where another input layer is used to modulate the input visible layer to drive the optimisation procedure. Among our contributions is a mathematical proof that demonstrates that $\theta$-RBM is able to learn rotation-invariant features according to a recently proposed invariance measure. Our method reaches an invariance score of $\sim 90\%$ on \textit{mnist-rot} dataset, which is the highest result compared with the baseline methods and the current state of the art in transformation-invariant feature learning in RBM. Using an SVM classifier, we also showed that our network learns discriminative features as well, obtaining $\sim 10\%$ of testing error.
\end{abstract}

\section{Introduction}
\label{sec:intro}

Most of the applications in computer vision require suitable image representations, which are invariant to certain geometrical transformations. 
%In the past years, a wide spectrum of methodologies has been proposed for obtaining invariant features. Hand-engineered image descriptors provided important results for the task at hand, but they could not properly accommodate the high variability of images. 
Recently, learned image representations have demonstrated impressive performance and have become a powerful tool in computer vision. In particular, neural networks have been shown to learn more discriminative image representations, by learning task-specific filters to apply to input images \cite{Coates2011}. A well-known neural network that learns features in an unsupervised fashion is the Restricted Boltzmann Machine (RBM) \cite{Smolensky1986}, characterised by a bipartite graph whose sides are referred as \textit{visible} and \textit{hidden} units respectively. 
%RBM is a probabilistic model that maximises the joint Boltzmann distribution over the visible and hidden layers, which optimisation is performed by the Contrastive Divergence (CD) algorithm \cite{Hinton2002}. 
In its standard formulation RBM does not accommodate nuisance factors in the scene, hence it cannot learn invariant features.

The easiest way to achieve a notion of invariance is to present to the learning algorithm as much as variability as possible. When this is not possible, simulated variability is introduced by applying transformations artificially to data, in what is known as \textit{dataset augmentation}. The act of altering input data has been exploited thoroughly in this regard (e.g. \cite{Schmidt2012,Sohn2012}). However, this has disadvantages such as: (i) introducing alias due to pixel interpolation in the transformed images; and (ii) the transformed data may not span over the entire space of transformations. Instead, recently in \cite{Cheng2013}, gradient information is exploited to transform data into a common reference frame and thus eschew augmentation.

In this paper we present the $\theta$-RBM that learns image representations that are invariant to rotations. Taking inspiration by previous works  \cite{Cheng2013,Sohn2013}, we propose an unfactored gated Restricted Boltzmann Machine that optimises a third-order tensor, which learns rotation-invariant features from input images. In our network, shown in \Cref{fig:theta-rbm}, we use the visible layer $x$ to provide images, while the input layer $r$ is \textit{only} used to indicate the dominant rotation of input patches. An input image is paired with the corresponding rotation, which if found by computing the dominant gradient of this image. \\

\textbf{Contributions:} Our contributions are: (i) a compact gated formulation of RBMs for rotation invariance; (ii) which  mathematically prove that it learns rotation-invariant features, on the basis of an known invariance measure \cite{Rasmus2015}; (ii) we do not require dataset augmentation or pre-training to learn rotated patterns as required by others \cite{Sohn2013}, thus training is faster; (iii) the \textit{unfactored} third-order tensor uses a limited amount of slices leading to a more compact data representation compared with other methods that use several factors \cite{Memisevic2007}; and finally (iv) we also show empirically that our method learns rotation-invariant features even when not presented with rotated data (ie. using an unrotated training set).

The remainder of this paper is organised as follows. \Cref{sec:rel_works} reviews related work. \Cref{sec:model} presents the proposed unfactored gated formulation of $\theta$-RBM. In \Cref{sec:inv} we provide the mathematical proof of invariant feature learning. In \Cref{sec:results} we report experimental results of our method on \textit{mnist-rot}. \Cref{sec:conclusions} offers conclusions.

\section{Related Works}
\label{sec:rel_works}

\begin{figure}[t]
	\centering
	\includegraphics[width=0.57\textwidth]{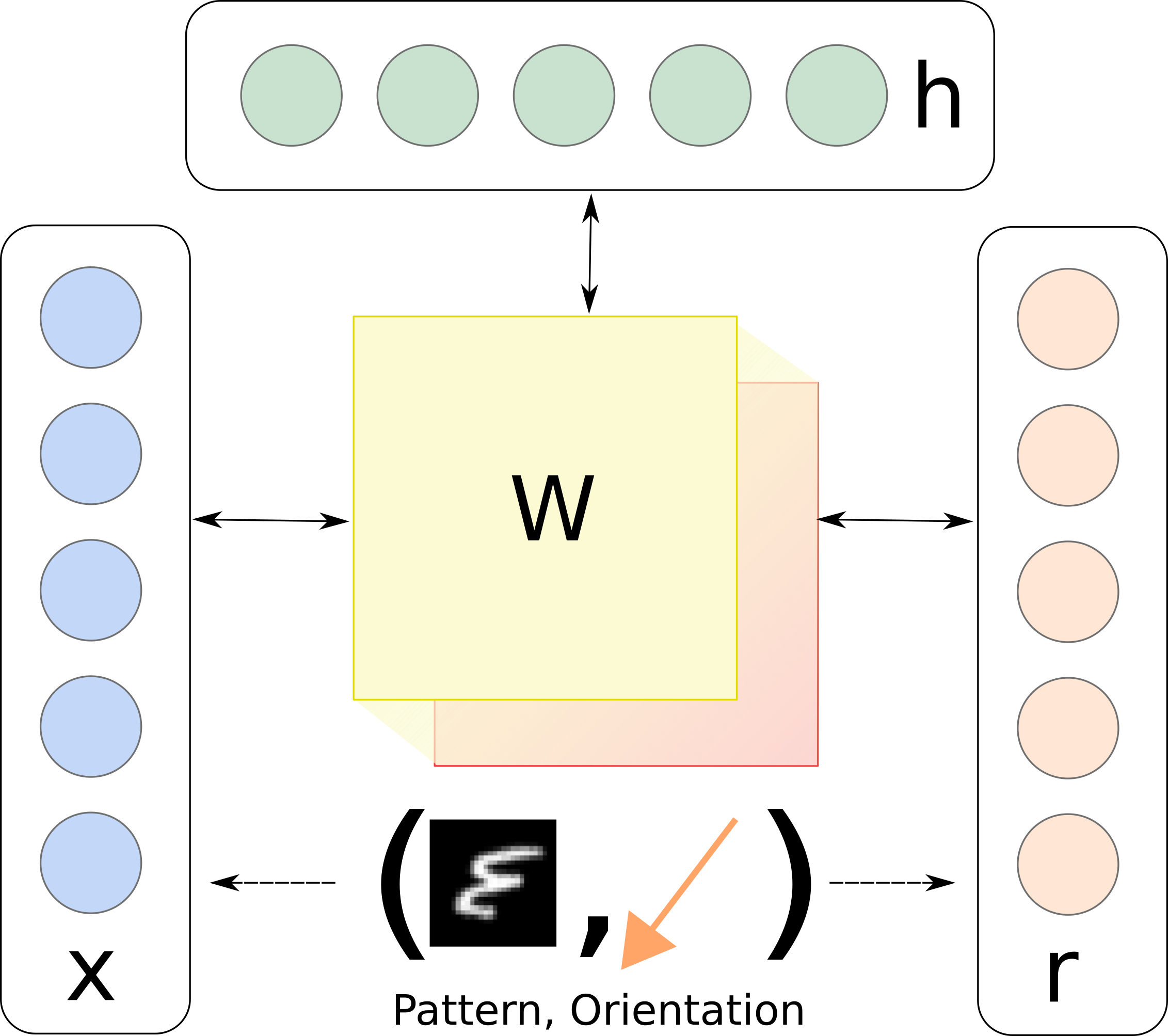}
	\caption{Representation of our $\theta$-RBM as an unfactored gated Boltzmann machine.  $x$ and $r$ are visible layers respectively representing input image patterns and rotations and $h$ is the hidden layer that learns the relationship among inputs via the third-order $\mathbf{W}$ shown in the middle.}
	\label{fig:theta-rbm}
	%\vspace{-1em}
\end{figure}

\mypar{Invariance in RBM:} Mainly due to their wide utilisation in deep learning, several adaptations and extensions of the original RBM model have been proposed to accommodate transformed variants of the same input. A Deep Belief Network (DBN) \cite{Hinton2006}, obtained by stacking RBMs, can learn higher-level representations with invariance characteristics \cite{Goodfellow2009}. In \cite{Shou2013} the authors extend the DBN and train it several times, such that at each iteration the learned weights are transformed w.r.t to a set $\mathcal{S}$ of transformations. Such approach increases the computational burden of training a single time a DBN linearly with the cardinality of $\mathcal{S}$, which may require a considerable amount of time. The Transformation-Invariant RBM (TI-RBM) \cite{Sohn2012} can learn features that are invariant to local transformations, by altering the training w.r.t. a set of suitable transformations (e.g., rotations, translations, scaling). The main drawbacks of this methodology is very similar to the straightforward approach of dataset augmentation, since transformed images may not fully cover the entire manifold where the data may lie. In \cite{Cheng2013}, patches are transformed beforehand using SIFT features, applying rotation and scaling to normalise the inputs. In \cite{Jaderberg2015}, the authors present the Spatial Transformer, a differentiable layer incorporated inside a convolutional architecture to learn affine transformations from the feature maps. Estimating the transformations that apply to an input image showed to be an interesting approach towards discriminative invariant features, and it is the direction we pursued and incorporated on our $\theta$-RBM.

\mypar{Gated networks:} Recently, a radically different view of network architectures has been proposed, known as gated networks \cite{Memisevic2013} which have demonstrated impressive results on learning image pairs. The basic idea of gated networks is to optimise the parameters involved in the model, such that the input layer $x$ is \textit{modulated} by another visible layer $y$ to find a common representation $h$. In \cite{Susskind2011}, the authors show that a Gated Boltzmann Machine (GBM) can learn rotated versions of faces. Even though the discriminative power of GBMs has been demonstrated in several applications \cite{Karianakis2013,Memisevic2013,Memisevic2009,Sohn2013,Susskind2011}, they do need a pair of related images to be trained. Nonetheless, in \cite{Sohn2013}, the authors showed that GBM can be used for a different purpose by just slightly varying the original formulation. In fact, instead using the $y$ layer to provide another image, they use it as feature selection activation layer, such that the model learns which features are more discriminative for the task at hand. These GBMs are able to efficiently use the three-way interaction of two input layers and a hidden layer which in turn provides a common representation for the inputs. Furthermore, in \cite{Mocanu2015} the authors showed that GBM can be extended such that they can deal with four-way interaction, adding a further layer to provide class labels. The two visible layers appearing in a three-way GBM are used during the optimisation to find correlation between them. As a result, the model is able to learn features that are invariant to affine transformations, when pairs of related data are provided (e.g., pairs of faces \cite{Susskind2011}). Following a similar approach of \cite{Sohn2013}, we use the third layer as input, in order to encode the magnitude of rotation of each input image. Therefore, the learning procedure is strongly conditioned by this interaction, and we rotate the learned filters such that the network learns rotation-invariant features.

\mypar{Invariance measures:} in \cite{Goodfellow2009} a measure invariance is introduced, which computes the mean activation of features extracted from transformed images over the mean activation of the features obtained from the original dataset. Even though this metric provides  a score of invariance, it is unbounded and therefore it is not easy to understand when a network is reaching the maximum invariance. Another metric that has been utilised in literature is the mean squared error over the $L_2$ distance of normalised features \cite{Zou2012}. While being bounded its main drawback is that vectors need to be  normalised prior to computing the invariance score. In fact, magnitude difference, resulting from transformed images, could be due to lack of invariance, which may bias a classification algorithm. In fact in \cite{Dosovitskiy2014}, the authors propose to compute the average over the classification scores of the transformed images. The latter approach requires two steps, namely feature learning and classifier training, which increase the whole complexity of the process. Another metric of invariance was proposed in \cite{Rasmus2015}, which behaves similar to the autocorrelation. It is bounded in $[0,1]$ with `1' reflecting full invariance. Amongst the proposed metrics in literature, we chose to develop our theory upon the latter one, because of its defined range and also its computational efficiency when used in empirical demonstrations.
\vspace{-0.8em}

\section{$\theta$-Restricted Boltzmann Machine}
\label{sec:model}

We present an unfactored gated Boltzmann machine (GBM) for learning rotation-invariant features. The original formulation of unfactored GBM correlates the interaction between the two visible layers and the hidden layer with a third-order tensor $\mathbf{W} \in \mathbb{R}^{H\times V\times S}$  \cite{Memisevic2009}, where $H$ is the number of hidden units of the layer $h$, $V$ is the dimension of the visible layer $x$, and $S$ is the dimension of the other visible (rotation) layer $r$, shown pictorially in \Cref{fig:theta-rbm}. We build upon this model to propose our $\theta$-RBM. For sake of presentation clarity, the following definitions and analyses are based upon the standard Bernoulli RBM, but the model can be easily extended to treat real-values as well, as described in \cite{Cho2011}. In fact, all our experiments are based on a real-valued Gaussian extension.

Firstly, let us define a support set of rotations $\mathcal{S} = \left\lbrace \varphi_1, \varphi_2, \ldots, \varphi_S \right\rbrace$, containing $S$ rotations. Then, the energy function that characterises our model is
\begin{equation}
\label{eq:theta_rbm_e}
E(v,h,r) = \sum_{s=1}^{S} \sum_{j=1}^{H} \sum_{k=1}^{V} r_s \left( - v_k h_j W_{jks}  - b_j h_j - c_k v_k \right),
\end{equation}
where $\mathbf{v}$, $\mathbf{h}$, and $\mathbf{r}$ are random vectors that can take only binary values. The third-order tensor $\mathbf{W}$ defines the relationship with the layers in the model, $c$ and $b$ define the bias terms for the visible $x$, and hidden layer $h$ respectively. A further constraint we add is the sparseness on $\mathbf{r}$
\begin{equation}
\label{eq:rot_constr}
\sum_{s=1}^S r_s  = 1.
\end{equation}
The vector $\mathbf{r}$ is used as one-hot indicator function, such that if a patch is rotated by $\varphi_s$ degree, then $r_s = 1$ and the remaining $r_k = 0$, $\forall k\neq s$. Because $\mathcal{S}$ defines the set of rotations that our model can learn, an input pattern $x$ can be subjected to one rotation. It is easy to check that the conditional probabilities deriving from \Cref{eq:theta_rbm_e} are
\begin{equation}
	\label{eq:h_given_x}
	p(h_j = 1 |\mathbf{x},\mathbf{r}) = \sigma\left( \sum_{s=1}^{S} r_s \left(b_j + \mathbf{W}_{j,\bullet,s} \mathbf{x}\right)\right),
\end{equation}
\begin{equation}
\label{eq:x_given_h}
p(x_k=1|\mathbf{h},\mathbf{r}) = \sigma\left( \sum_{s=1}^{S} r_s \left(c_k + \mathbf{h}' \mathbf{W}_{\bullet,k,s}\right)\right),
\end{equation}
where $\mathbf{W}_{j,\bullet,s}$ denotes the $j$-th row in the slice $s$ of the third-order tensor $\mathbf{W}$, $\mathbf{W}_{\bullet,k,s}$ denotes the $k$-th column in the $s$-th slice in $\mathbf{W}$, $\mathbf{h}'$ is the transpose of the column vector $\mathbf{h}$, and $\sigma(y)$ is the sigmoid function. In our model, the layer $r$ is treated as input and rotation is found as the dominant orientation experimental computed using the angle with highest frequency in the histogram of oriented gradients.
%Similarly to \cite{Sohn2013}, the conditional probability over $z$ has the following form

%\begin{equation}
%\label{eq:z_given_xh}
%p(z_s=1 | x,h) = \frac{\exp\left(\sum_{k=1}^{V} v_k \left(h^T W_{\bullet,k,s} + c_k\right)\right)}{\sum_{s'=1}^{S} \exp\left(\sum_{k=1}^{V} v_k \left(h^T W_{\bullet,k,s'} + c_k\right)\right)}.
%\end{equation}

For a single pattern taken from the dataset $(\tilde{x},\tilde{s})$, image patch and orientation respectively, the Contrastive Divergence algorithm \cite{Hinton2002} is performed as usual between the visible layer $x$ and the hidden layer $h$. This procedure computes the gradient $\mathbf{\nabla W_{\tilde{s}}}$, that is the $\tilde{s}$-th frontal slice of the tensor $\mathbf{W}$ (we denoted $\mathbf{W_{\bullet,\bullet,s}}$ as $\mathbf{W_s}$ for brevity). The remaining slices in $\mathbf{W}$ are computed as follows
\begin{equation}
	\label{eq:shared_term}
	\nabla \mathbf{W}_{k} = R_\theta (\nabla \mathbf{W}_{\tilde{s}}),\; \forall k=1,2,\ldots,S, k\neq \tilde{s}.
\end{equation}
\vspace{-1em}
\begin{equation}
\label{eq:rot}
\textrm{where}\qquad R_\theta(\mathbf{A}) = \mathbf{A}\mathbf{T}'_\theta,
\end{equation}
such that $R_\theta(\mathbf{A})$ is a transformation function that rotates by $\theta = \varphi_t - \varphi_{\tilde{s}}$ degrees each row in $\mathbf{A}$. The matrix $\mathbf{T}\in \mathbb{R}^{n\times n}$ defines the rotation and is represented as a particular permutation matrix such that $\mathbf{T} \mathbf{T}' = \mathbf{T}'\mathbf{T} = \mathbf{I}$, $\operatorname{det}(\mathbf{T})=\pm 1$. The input matrix $\mathbf{A}$ is transposed in $R_\theta(\bullet)$ because filters in a slice in $\mathbf{W}$ are disposed row-wise. Specifically, a filter is a single row in any slice of the tensor $\mathbf{W}$ and it has the same dimension of input images. %The angle $\theta = \varphi_{t} - \varphi_{s'}$, that is filters are rotated by the difference of the angles between the target slice $t$ and the actual orientation $\tilde{s}$ over the input $\tilde{x}$. 
Once the third-order tensor $\mathbf{\nabla W}$ is computed, the weight matrix is updated as follows
\begin{equation}
\label{eq:update}
\mathbf{W^{(t)}} = \mathbf{W^{(t-1)}} + \eta \mathbf{\nabla W^{(t)}} + \alpha^{(t)} \mathbf{\nabla W^{(t-1)}},
\end{equation}
where $\eta$ is the learning rate, $\alpha^{(t)}$ is the momentum at time $t$. In \eqref{eq:update} we use the superscript index $t$ to indicate the iteration time. 

%Our $\theta$-RBM can be easily extended to treat real-valued input data. In fact, we adopted the learning method proposed in \cite{Cho2011} for the Gaussian-Bernoulli RBM variant. Furthemore, our model can be straightforwardly adapted to implement the sparsity regulariser as in \cite{Lee2008}, because it does not mutate the values in $W$.

\section{Proving Rotation Invariance}
\label{sec:inv}
We want to demonstrate mathematically that the method above learns rotation invariant features. We used the invariance measure proposed in \cite{Rasmus2015}. Henceforth, we will refer to this measure as $\gamma$\textit{-score} and we will provide a general definition, then we will develop our theory upon it. We adopted this measure because it ranges between `0' and `1', which indicate full variance and full invariance to transformations respectively (see also \Cref{sec:rel_works}). 
%This property does not hold for other proposed invariance metric (e.g.,\cite{Goodfellow2009}) in literature.

\subsection{The $\gamma$-score}

%As a measure of invariance we use the autocorrelation measure proposed in \cite{Rasmus2015} since differently to other measures (e.g. \cite{Goodfellow2009}), it ranges in $[0,1]$ with 1 meaning perfect invariance. %, where values of `1' mean full invariance (conversely, values of `0' mean full variance).

%After $\theta$-RBM has been trained, from the training set $\mathcal{X}$ we generate $S$ transformed training sets $\mathcal{X}^{(s)}$, for all $s$ in the support set of rotations $\mathcal{S}$ (cf. \Cref{sec:model}). Therefore, the transformed set $\mathcal{X}^{(s)} = \left\lbrace R_{\varphi_s}(x), \forall x \in \mathcal{X}, \varphi_s \in \mathcal{S} \right\rbrace$. 

Let $\mathcal{S}$ be a set of any transformations\footnote{We slightly abuse the definition of the set here to be any transform just for the purpose of presenting the score but our analysis and results presume that $\mathcal{S}$ contains only rotations.} that can be applied to the dataset $\mathcal{X}$. Moreover, let $h_j(\mathbf{x})$ be the state of the $j$-th hidden neuron when the pattern $x$ is provided (e.g., similar to \Cref{eq:h_given_x}). The mean activation of a given element in the training set $\mathcal{X}$ across all the transformations in $\mathcal{S}$ is computed as follows:
\begin{equation}
\label{eq:mu}
\mu_j(\mathbf{x}) = \frac{1}{S} \sum_{s \in \mathcal{S}} h_j (\mathbf{T}_s(\mathbf{x})),
\end{equation}
where $S$ is the cardinality of the support set $\mathcal{S}$. Hence, the $\gamma$-score for the $j$-th hidden unit is given by
\begin{equation}
\label{eq:inv_meas}
\gamma_j = \frac{\textrm{var}\left\lbrace \mu_j(\mathbf{x}) \right\rbrace_{x\in \mathcal{X}}}{\textrm{var}\left\lbrace h_j(\mathbf{x})\right\rbrace_{x\in \mathcal{X}}},
\end{equation}
that is the variance of the mean activation of all transformed samples in $\mathcal{X}$ over the variance of activations of the original data $\mathcal{X}$. The invariance measure in \Cref{eq:inv_meas} achieves its maximum value `1' when features are invariant to the  set of transformations $\mathcal{S}$.

\subsection{Theorem statement}

In this section, we will prove that our $\theta$-RBM learns rotation-invariant features, when the learning procedure described in \Cref{sec:model} is used. Since the maximum value of the $\gamma$-score is achieved only when numerator and denominator are equal, we will show that the argument of the variance functions $\mu_j$ and $h_j$ are equal. For our purposes, the support set of transformations $\mathcal{S}$ is formed by rotations. Before to proceed with the proof, we need the support of the following lemma.

\begin{prop}
	\label{prop:prop1}
	Given a third-order tensor $\mathbf{W} \in \mathbb{R}^{H\times V \times S}$ optimised as described in \Cref{sec:model}, then $\mathbf{W}^{(t)}_{s'} = R_\theta (\mathbf{W}^{(t)}_{s})$, with $\theta = \varphi_{s'} - \varphi_s$,\; $\varphi_{s'}, \varphi_{s} \in \mathcal{S}$.
\end{prop}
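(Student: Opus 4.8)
The plan is to prove \Cref{prop:prop1} by induction on the iteration index $t$, leaning on two elementary facts about the operator $R_\theta$ of \eqref{eq:rot}. First, $R_\theta$ is \emph{linear}: it is right-multiplication by the fixed matrix $\mathbf{T}'_\theta$, so $R_\theta(\lambda\mathbf{A}+\mu\mathbf{B}) = \lambda R_\theta(\mathbf{A})+\mu R_\theta(\mathbf{B})$. Second, the family $\{R_\theta\}$ is closed under composition with $R_{\theta_1}\!\circ R_{\theta_2} = R_{\theta_1+\theta_2}$, which is just $\mathbf{T}_{\theta_2}'\mathbf{T}_{\theta_1}' = (\mathbf{T}_{\theta_1}\mathbf{T}_{\theta_2})' = \mathbf{T}_{\theta_1+\theta_2}'$ for the permutation matrices encoding the rotations; in particular $R_\theta$ is invertible with $R_\theta^{-1} = R_{-\theta}$. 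A useful consequence is that \eqref{eq:shared_term}, which only ties every slice to the drawn slice $\tilde{s}$, actually forces the full pairwise relation $\nabla\mathbf{W}^{(\tau)}_{s'} = R_{\varphi_{s'}-\varphi_s}\big(\nabla\mathbf{W}^{(\tau)}_s\big)$ for \emph{every} pair $s,s'$ and at \emph{every} step $\tau$: write $\nabla\mathbf{W}^{(\tau)}_{s'} = R_{\varphi_{s'}-\varphi_{\tilde s}}(\nabla\mathbf{W}^{(\tau)}_{\tilde s})$ and $\nabla\mathbf{W}^{(\tau)}_{\tilde s} = R_{\varphi_{\tilde s}-\varphi_s}(\nabla\mathbf{W}^{(\tau)}_s)$, then compose.

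For the base case, the tensor is initialised so that all slices are rotations of a single reference slice (for instance, one slice is sampled and the remaining ones are filled in by the rule \eqref{eq:shared_term}, or all slices are zero), hence $\mathbf{W}^{(0)}_{s'} = R_{\varphi_{s'}-\varphi_s}(\mathbf{W}^{(0)}_s)$ holds by construction.

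For the inductive step, assume $\mathbf{W}^{(t-1)}_{s'} = R_\theta(\mathbf{W}^{(t-1)}_s)$ with $\theta=\varphi_{s'}-\varphi_s$ for every pair. Since $\mathbf{r}$ is one-hot by \eqref{eq:rot_constr}, the conditionals \eqref{eq:h_given_x}--\eqref{eq:x_given_h} collapse to the standard RBM conditionals on slice $\tilde s$, so Contrastive Divergence yields an ordinary RBM gradient in $\nabla\mathbf{W}^{(t)}_{\tilde s}$, and by the consequence noted above both $\nabla\mathbf{W}^{(t)}$ and $\nabla\mathbf{W}^{(t-1)}$ satisfy the same pairwise rotation relation. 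Applying the update \eqref{eq:update} slice by slice and pulling the common rotation out by linearity gives
\begin{align*}
\mathbf{W}^{(t)}_{s'} &= \mathbf{W}^{(t-1)}_{s'} + \eta\,\nabla\mathbf{W}^{(t)}_{s'} + \alpha^{(t)}\,\nabla\mathbf{W}^{(t-1)}_{s'} \\
&= R_\theta\big(\mathbf{W}^{(t-1)}_s\big) + \eta\,R_\theta\big(\nabla\mathbf{W}^{(t)}_s\big) + \alpha^{(t)}\,R_\theta\big(\nabla\mathbf{W}^{(t-1)}_s\big) \\
&= R_\theta\big(\mathbf{W}^{(t-1)}_s + \eta\,\nabla\mathbf{W}^{(t)}_s + \alpha^{(t)}\,\nabla\mathbf{W}^{(t-1)}_s\big) = R_\theta\big(\mathbf{W}^{(t)}_s\big),
\end{align*}
which closes the induction.

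The hard part is not the algebra above but justifying the composition law $R_{\theta_1}\!\circ R_{\theta_2} = R_{\theta_1+\theta_2}$ exactly: since the $\mathbf{T}_\theta$ are permutation matrices approximating genuine planar rotations, this identity holds on the nose only when the support set $\mathcal{S}$ lies on a regular angular grid, so that all differences $\varphi_{s'}-\varphi_s$ stay on that grid and the discretised rotations form a group, and when the row-wise rotation of a filter is implemented consistently across slices. I would therefore state this regularity assumption on $\mathcal{S}$ up front; if one instead allows interpolation error in $R_\theta$, the equality in \Cref{prop:prop1} degrades to an approximate one and would have to be propagated as a bounded perturbation through \eqref{eq:update}.
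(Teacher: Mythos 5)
Your proof is correct and follows essentially the same route as the paper's: induction on the iteration $t$, with a base case in which all slices are initialised as rotations of a common reference matrix, and an inductive step that pushes $R_\theta$ through the update \eqref{eq:update} by linearity and invokes \eqref{eq:shared_term} for the gradient slices. You are in fact slightly more careful than the paper in two places it glosses over --- deriving the full pairwise relation $\nabla\mathbf{W}_{s'} = R_{\varphi_{s'}-\varphi_s}(\nabla\mathbf{W}_s)$ from the $\tilde{s}$-anchored rule via the composition law, and flagging that this law requires the discretised rotations on $\mathcal{S}$ to form a group (which the paper's regular $40^\circ$ grid does satisfy) --- but the argument is the same.
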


\begin{proof}
	We will proceed by induction over the iteration $t$. For the base case $t=0$ \todo{better to say initial condition?}, we impose that:
	\begin{itemize}
		\item $\mathbf{M} \in R^{H\times V}$ matrix initialised somehow (e.g., normal distribution)\footnote{We observed that the base case of the induction can be relaxed. Experimental evidence showed that by initialising $\mathbf{W}$ with random numbers drawn from a normal distribution it is still possible to have rotation-invariant features.},
		\item $\mathbf{W}^{(0)}_{s} = R_{\varphi_s} (\mathbf{M})$, $\forall s \in \mathcal{S}$.
	\end{itemize}

namely all the slices in $\mathbf{W}^{(0)}$ are initialised as rotated versions of $\mathbf{M}$, which initially can be any matrix. Now, let us suppose that the lemma is true until $t-1$, and demonstrate it for $t$. 

\begin{align*}
	R_\theta(\mathbf{W}^{(t)}_{s}) & = \underbrace{R_\theta \left( \mathbf{W}^{(t-1)}_{s} + \eta \mathbf{\nabla W}^{(t)}_{s} + \alpha^{(t)} \mathbf{\nabla W}^{(t-1)}_{s} \right)}_\textrm{From \Cref{eq:update}}\\
	& = \underbrace{R_\theta \left( \mathbf{W}^{(t-1)}_{s}\right) + \eta R_\theta \left(\mathbf{\nabla W}^{(t)}_{s}\right) + \alpha^{(t)} R_\theta \left(\mathbf{\nabla W}^{(t-1)}_{s} \right)}_\textrm{Using the linearity of the transformation} \\
	& = \underbrace{\mathbf{W}^{(t-1)}_{s'}}_\textrm{By induction} + \eta \underbrace{\mathbf{\nabla W}^{(t)}_{s'} + \alpha^{(t)} \mathbf{\nabla W}^{(t-1)}_{s'}}_\textrm{From \Cref{eq:shared_term}} = \mathbf{W}^{(t)}_{s'}.
\end{align*}
\end{proof}

 %It appears that the random initialisation is \textit{forgotten} by the learning algorithm after a sufficient number of iterations.

%\todo{unclear -- [vg] never mind. I need space and I commented it}

\begin{thm}
	\label{thm:thm1}
	Under the hypotheses of \Cref{prop:prop1} and given a support set $\mathcal{S}$ of $S$ rotations, $\gamma = 1\,$ for $\theta$-RBM, i.e., features are invariant under the rotations in $\mathcal{S}$. 
\end{thm}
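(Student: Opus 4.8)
The plan is to build directly on the observation stated just before the theorem: the $\gamma$-score in \Cref{eq:inv_meas} is a ratio of variances taken over the same sample set $\mathcal{X}$, and it equals $1$ precisely when numerator and denominator coincide. Hence it suffices to prove the pointwise identity $\mu_j(\mathbf{x}) = h_j(\mathbf{x})$ for every $\mathbf{x}\in\mathcal{X}$ and every hidden unit $j$. By the definition in \Cref{eq:mu} this in turn reduces to the single claim $h_j(\mathbf{T}_s(\mathbf{x})) = h_j(\mathbf{x})$ for all $s\in\mathcal{S}$, since then $\mu_j(\mathbf{x}) = \frac{1}{S}\sum_{s\in\mathcal{S}} h_j(\mathbf{x}) = h_j(\mathbf{x})$. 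Here $h_j$ denotes the deterministic activation of \Cref{eq:h_given_x}.

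First I would unfold $h_j(\mathbf{T}_s(\mathbf{x}))$ through \Cref{eq:h_given_x}. Because $\mathbf{r}$ is the one-hot dominant-orientation indicator subject to \Cref{eq:rot_constr}, presenting the rotated patch $\mathbf{T}_s(\mathbf{x})$ activates exactly the slice $t$ whose angle $\varphi_t$ is the dominant orientation of $\mathbf{x}$ shifted by $\varphi_s$; hence $h_j(\mathbf{T}_s(\mathbf{x})) = \sigma\!\left(b_j + \mathbf{W}_{j,\bullet,t}\,\mathbf{T}_s\mathbf{x}\right)$, where $\mathbf{T}_s$ is the permutation matrix that implements the rotation by $\varphi_s$ on the pixel vector. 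Then I invoke \Cref{prop:prop1}: writing $\tilde{s}$ for the slice index attached to $\mathbf{x}$ itself, $\mathbf{W}_{t} = R_{\varphi_t-\varphi_{\tilde{s}}}(\mathbf{W}_{\tilde{s}}) = R_{\varphi_s}(\mathbf{W}_{\tilde{s}})$, and reading off the $j$-th row via \Cref{eq:rot} gives $\mathbf{W}_{j,\bullet,t} = \mathbf{W}_{j,\bullet,\tilde{s}}\,\mathbf{T}'_{\varphi_s}$. Substituting yields $h_j(\mathbf{T}_s(\mathbf{x})) = \sigma\!\left(b_j + \mathbf{W}_{j,\bullet,\tilde{s}}\,\mathbf{T}'_{\varphi_s}\mathbf{T}_s\,\mathbf{x}\right)$.

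The last step is to note that the matrix $\mathbf{T}_{\varphi_s}$ used to rotate filters in \Cref{eq:rot} is the very same operator that rotates images, so $\mathbf{T}_s = \mathbf{T}_{\varphi_s}$, and the orthogonality $\mathbf{T}'\mathbf{T} = \mathbf{I}$ recorded in \Cref{sec:model} collapses $\mathbf{T}'_{\varphi_s}\mathbf{T}_s$ to the identity. Hence $h_j(\mathbf{T}_s(\mathbf{x})) = \sigma(b_j + \mathbf{W}_{j,\bullet,\tilde{s}}\mathbf{x}) = h_j(\mathbf{x})$, which closes the chain: $\mu_j = h_j$ pointwise, the two variances in \Cref{eq:inv_meas} coincide for every $j$, so $\gamma_j = 1$ for all $j$ and therefore $\gamma = 1$.

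I expect the main obstacle to be the bookkeeping that ties together the two a priori distinct rotation operators — the permutation matrix $\mathbf{T}_{\varphi}$ acting on the rows (filters) of $\mathbf{W}$ and the image-space rotation $\mathbf{T}_s$ of \Cref{eq:mu} — and, relatedly, the justification that the one-hot indicator $\mathbf{r}$ for $\mathbf{T}_s(\mathbf{x})$ is exactly the $\varphi_s$-shift of the indicator for $\mathbf{x}$. This needs (i) $\mathcal{S}$ to be closed under the relevant angle additions so that the shifted slice $t$ actually exists, and (ii) the pixel-permutation model of rotation to be treated as exact (no interpolation or boundary loss) — both already assumed in the idealised setting of \Cref{sec:model}. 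Once those are granted, the remainder is a direct substitution using the linearity of $R_\theta$ exploited in \Cref{prop:prop1} together with the orthogonality of $\mathbf{T}$.
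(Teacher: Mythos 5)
Your proposal follows essentially the same route as the paper's proof: reduce $\gamma_j=1$ to the pointwise identity $\mu_j(\mathbf{x})=h_j(\mathbf{x})$, show each summand $h_j(\mathbf{T}_s(\mathbf{x}))$ equals $h_j(\mathbf{x})$ by combining the one-hot selection of \Cref{eq:rot_constr}, the slice relation of \Cref{prop:prop1}, and the orthogonality $\mathbf{T}'_\theta\mathbf{T}_\theta=\mathbf{I}$. Your version is, if anything, slightly more explicit than the paper about the implicit assumptions (closure of $\mathcal{S}$ under the relevant angle shifts, and that the filter-rotation and image-rotation operators are the same exact permutation), but the argument is the same.
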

	
\begin{proof}
	We have to prove that $\gamma_j = 1$, $j=1, 2, \ldots, H$. From \Cref{eq:inv_meas}, we will show that the numerator and denominator coincide. Some preliminaries that help exposition. In \Cref{eq:rot} the rotation transformation was defined as a row-wise operation. Since we used as convention that vectors are column-wise, we will use the following definition
	\begin{equation}
	\label{eq:rot_t}
	R'_\theta ( \mathbf{x} ) \equiv \left[ R_\theta (\mathbf{x}')\right]' = \left[ \mathbf{x}'\mathbf{T}_\theta' \right]',
	\end{equation}
	such that, given a column vector as argument, we obtain a column vector as result. Now, starting from the definition of $\mu_j$ showed in \Cref{eq:mu}, we get
    \begin{align*}
		\mu_j(\mathbf{x}) & = \frac{1}{S} \sum_{\theta \in \mathcal{S}} h_j(R'_\theta(\mathbf{x}))  =  \frac{1}{S} \sum_{\theta \in \mathcal{S}} \underbrace{\sigma\left( \sum_{t=1}^{S} r_t \left(b_j + \mathbf{W}_{j,\bullet,t} R'_\theta(\mathbf{x}) \right)\right)}_\textrm{From \Cref{eq:h_given_x}} \\
		& = \frac{1}{S} \sum_{\theta \in \mathcal{S}} \underbrace{\sigma\left(b_j + W_{j,\bullet,s'} R'_\theta(\mathbf{x})\right)}_\textrm{From \eqref{eq:rot_constr}. Let $r_{s'} = 1$} = \frac{1}{S} \sum_{\theta \in \mathcal{S}} \underbrace{\sigma\left(b_j + R_{\theta}(\mathbf{W}_{j,\bullet,s})  R'_{\theta}(\mathbf{x})\right)}_\textrm{From Lemma \ref{prop:prop1}, $\exists s : R_\theta(\mathbf{W}_s) = \mathbf{W}_{s'}$ } \\
		& = \frac{1}{S} \sum_{\theta \in \mathcal{S}} \underbrace{\sigma\left(b_j + \mathbf{W}_{j,\bullet,s} \mathbf{T}'_\theta \left[\mathbf{x}' \mathbf{T}'_\theta\right]' \right)}_\textrm{From Equations \eqref{eq:rot} and \eqref{eq:rot_t}}  = \frac{1}{S} \sum_{\theta \in \mathcal{S}} \underbrace{\sigma\left(b_j + \mathbf{W}_{j,\bullet,s} \mathbf{T}_\theta' \mathbf{T}_\theta \mathbf{x} \right)}_{(AB)' = B'A'} \\
		& = \frac{1}{S} \sum_{\theta \in \mathcal{S}} \underbrace{\sigma\left(b_j + \mathbf{W}_{j,\bullet,s} \mathbf{x} \right)}_{\mathbf{T}_\theta  \mathbf{T}_\theta'=\mathbf{T}'_\theta \mathbf{T}_\theta =I}  = \frac{1}{S} \sum_{\theta \in \mathcal{S}} \sigma\left(  \sum_{t=1}^S r_t \left(b_j + \mathbf{W}_{j,\bullet,t}  \mathbf{x} \right) \right) \\
		& = \frac{1}{S} \sum_{\theta \in \mathcal{S}} h_j(\mathbf{x}) = h_j(\mathbf{x}),
		\end{align*}
		\noindent where $\theta = \varphi_{s} - \varphi_{s'}$,\;\;$\varphi_{s}, \varphi_{s'} \in \mathcal{S}$. Since $\mu_j(\mathbf{x}) = h_j(\mathbf{x})$, then also their variance over all the samples in the training set is equal. Therefore $\gamma_j=1$.
\end{proof}

%\begin{align*}
%		\mu_j(x,r) & = \frac{1}{S} \sum_{s'=1}^{S} h_j(x^{(s')},r^{(s')})  =  \frac{1}{S} \sum_{s'=1}^{S} \underbrace{\sigma\left( \sum_{t=1}^{S} r^{(s')}_t \left(b_j + W_{j,\bullet,t}x^{(s')}\right)\right)}_\textrm{From \Cref{eq:h_given_x}} \\
%		& = \frac{1}{S} \sum_{s'=1}^{S} \underbrace{\sigma\left(b_j + W_{j,\bullet,s'} x^{(s')}\right)}_\textrm{From \eqref{eq:rot_constr}. Let $r^{(s')}_{s'} = 1$} = \frac{1}{S} \sum_{s'=1}^{S} \underbrace{\sigma\left(b_j + R_{\theta^{s'}}(W_{j,\bullet,s}) \left[R_{\theta^{s'}}(x')\right]' \right)}_\textrm{From Lemma \ref{prop:prop1} and for construction of $x^{(s')}$} \\
%		& = \frac{1}{S} \sum_{s'=1}^{S} \underbrace{\sigma\left(b_j + W_{j,\bullet,s} T' \left[x' T'\right]' \right)}_\textrm{From \Cref{eq:rot}}  = \frac{1}{S} \sum_{s'=1}^{S} \underbrace{\sigma\left(b_j + W_{j,\bullet,s} T' T x \right)}_{(AB)' = B'A'} \\
%		& = \frac{1}{S} \sum_{s'=1}^{S} \underbrace{\sigma\left(b_j + W_{j,\bullet,s}  x \right)}_{TT'=T'T=I}  = \frac{1}{S} \sum_{s'=1}^{S} \sigma\left(  \sum_{t=1}^S r_t \left(b_j + W_{j,\bullet,t}  x \right) \right) \\
%		& = \frac{1}{S} \sum_{s'=1}^{S} h_j(x,r) = h_j(x,r),
%		\end{align*}

With this theorem, we mathematically proved that the features learnt by our $\theta$-RBM are invariant under a set of rotations $\mathcal{S}$. \todo{remind me to ask you smth}
 %Practically the thesis proofed in Lemma \ref{prop:prop1} holds such that $W^{(t)}_{s'} \approx R_\theta (W^{(t)}_{s})$.

\section{Experimental Results}
\label{sec:results}

\begin{figure}
\centering
\includegraphics[width=0.5\textwidth]{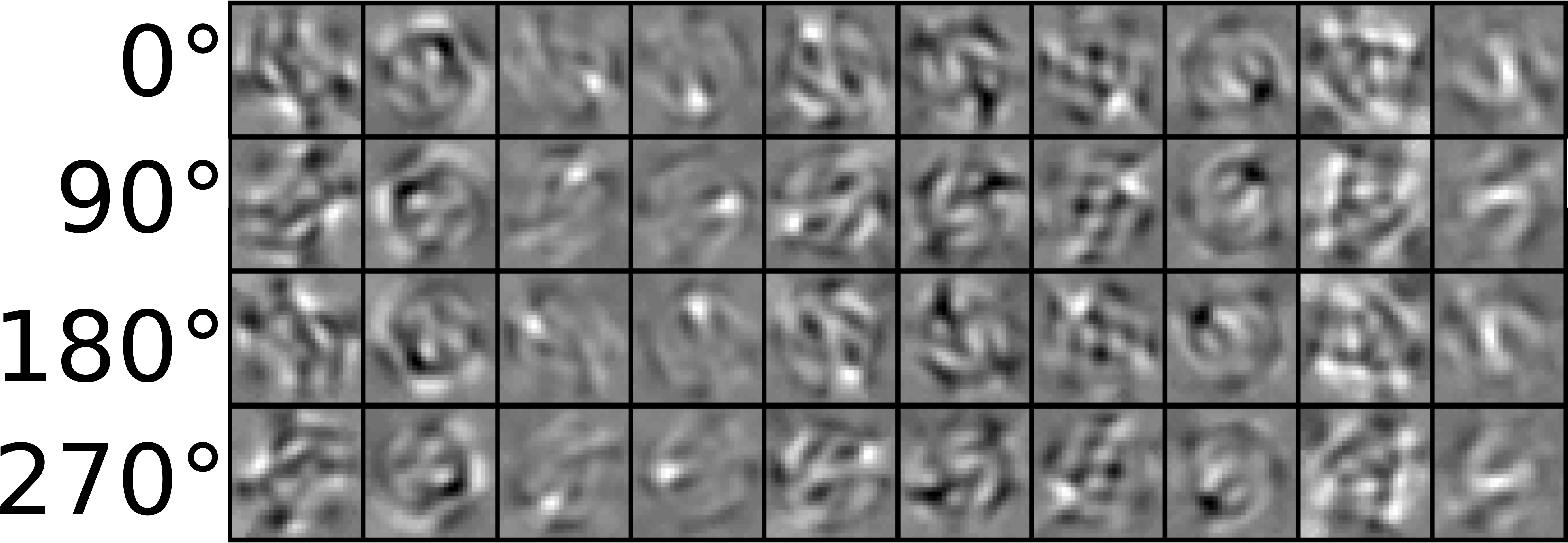}
\caption{Filters learned from $\theta$-RBM on \textit{mnist-rot} dataset. We show a subset of the learned filters for brevity. Observing the columns, filters are rotated versions of each other.}
\label{fig:filters}
\end{figure}

In this section we show experimentally that $\theta$-RBM is able to learn rotation-invariant features. Briefly summarized, training with \textit{mnist-rot} dataset \cite{Larochelle2007} we will show the claim of \Cref{thm:thm1} and we will estimate invariance according to \Cref{eq:inv_meas}. As baselines, we compared with the standard (Gaussian-Bernoulli) RBM\todo{right? [vg] si}, the \textit{Oriented RBM} (O-RBM), which is trained by pre-aligning the input images by their respectively dominant orientation. We compare also with TI-RBM --a recent state-of-the-art method \cite{Sohn2012}. We train our $\theta$-RBM using the Gaussian-Bernoulli formulation in \cite{Cho2011}, which can be straightforwardly done by adapting Equations \eqref{eq:theta_rbm_e}  and \cref{eq:h_given_x,eq:x_given_h} such that they can deal with continuous variables. To demonstrate that $\theta$-RBM does not need the input data to span the space of transformations in another experiment we train the proposed model with \textit{mnist} dataset and test it with \textit{mnist-rot}. Since our method relies on an estimation of the dominant orientation of an input image, we perturbed the estimate of orientation and evaluate classification accuracy to test robustness.

\mypar{Data normalisation:} Before training the networks, we normalised the data such that they have zero mean and unit variance for each data component (i.e., the same input feature --pixel location). In case of $\theta$-RBM, normalisation is performed within data having the same orientation.

\mypar{Discriminative power of learned features:} By these experiment we want to demonstrate that our method is able to learn rotation-invariant features, which have a strong discriminative power. We trained the networks using the following parameters: $500$ of hidden units, learning rate $\eta = 0.01$, constant momentum $\alpha = 0.9$ , and $S=9$ rotations in the support set $\mathcal{S} = \left\lbrace 0^\circ, 40^\circ, 80^\circ, 120^\circ, 160^\circ, 200^\circ, 240^\circ, 280^\circ, 320^\circ \right\rbrace$ (where applicable). We trained an RBF-SVM classifier \cite{Vapnik95}, where parameters were found via grid search. Best results were obtained with loss parameter $C=10$ and parameter of the Gaussian kernel $\sigma = 0.02$. In \Cref{tab:svm} we show the testing error on the \textit{mnist-rot} dataset. Observe that our method has the lowest error. In \Cref{fig:filters} we show a subset of filters that $\theta$-RBM learned on the \textit{mnist-rot} dataset. Observe that filters are rotated versions of each other, giving experimental evidence on the validity of \Cref{prop:prop1}.

\mypar{Demonstrating invariance:} By these experiments, we want to assess the degree of invariance that $\theta$-RBM can achieve. We use \textit{mnist-rot} for the empirical demonstration of \Cref{thm:thm1}, using the same experimental setup as above. In order to compute the $\gamma$-score (c.f. \Cref{eq:inv_meas}), we need a set of transformations to compute the $\mu_j$ (c.f. \Cref{eq:mu}). Instead of using the same support set used at training time, we instead used $\mathcal{S}' = \left\lbrace \varphi_s + \Delta : \varphi_s \in \mathcal{S} \right\rbrace$ to compute $\mu_j$, that is a set of rotations that are derived from $\mathcal{S}$, but are shifted by $\Delta$. We do that because it would result on extremely high invariance score, since the set of rotations is the one that the algorithms were trained with. In our experiments, we set $\Delta = 20^\circ$. \Cref{tab:inv_meas} we show the $\gamma$-score for our method, baseline RBM and O-RBM, and TI-RBM, applying different sparsity target. Overall our method achieves the highest $\gamma$-score, compared with the other methods. Different sparsity values appears not to influence our results, which is in stark contrast to what happens for TI-RBM, which has $-5\%$ loss when the sparsity target decreases. With the achieve $\gamma$-score of $\sim 0.9$ by our method even empirically we can see the validity of \Cref{thm:thm1}.

%\todo{why you do this?} To compute the $\gamma$-score, we used a different support set $\mathcal{S}' = \left\lbrace \varphi_s + \Delta : \varphi_s \in \mathcal{S} \right\rbrace$, where $\Delta = 20$ computed as half of the distance between two consecutive angles in $\mathcal{S}$. (Since we assume angles are equidistant, then the displacement is constant).

\begin{table}[t]
\centering
\caption{Classification of \textit{mnist-rot} digits, using RBF-SVM classifier. We report the testing error, obtained for each method, using different sparsity values.}
\label{tab:svm}
\begin{tabular}{@{}lccc@{}}
\toprule
\multicolumn{1}{c}{\textit{\textbf{}}} & Sparsity 0.3     & Sparsity 0.2     & Sparsity 0.1     \\ \midrule
RBM                                    &        15.41\%          &      16.04\%            &       16.39\%           \\
O-RBM                                  & 15.61\%          & 15.87\%          & 16.61\%          \\
TI-RBM                                 & 14.07\%          & 13.35\%          & 14.08\%          \\
\textbf{$\mathbf{\theta}$-RBM}                          & \textbf{10.31\%} & \textbf{10.08\%} & \textbf{10.85\%} \\ \bottomrule
\end{tabular}
\end{table}

\begin{table}[t]
	\centering
	\caption{The $\gamma$-score, as described in \Cref{sec:inv}, on features learned by different approaches. All the trained models used Gaussian-Bernoulli formulation to treat continuous-value data.}
	
	\label{tab:inv_meas}
	\begin{tabular}{@{}lllllll@{}}
		\toprule
		& \multicolumn{2}{c}{Sparsity 0.3}                     & \multicolumn{2}{c}{Sparsity 0.2}                     & \multicolumn{2}{c}{Sparsity 0.1}                     \\ \midrule
		& \multicolumn{1}{c}{Train} & \multicolumn{1}{c}{Test} & \multicolumn{1}{c}{Train} & \multicolumn{1}{c}{Test} & \multicolumn{1}{c}{Train} & \multicolumn{1}{c}{Test} \\ \midrule
		RBM         &      0.1746                    &       0.1775                   &   0.1724                        &    0.1751                      &            0.1721               &      0.1746                    \\
		O-RBM       &    0.1754                       &     0.1782                     &       0.1737                    &         0.1764                 &      0.1710                     &   0.1735                       \\
		TI-RBM \cite{Sohn2012}      &      0.8434                     &      0.8295                    &  0.8106                          &         0.8118                 &                    0.7878       &          0.7917                \\
		$\boldmath{\theta}$\textbf{-RBM}   & \textbf{0.9000}                    & \textbf{0.9062}                   & \textbf{0.9087}                   & \textbf{0.9098}                  & \textbf{0.9103}                    & \textbf{0.9093}                   \\ \bottomrule
	\end{tabular}
\end{table}

\mypar{Training on \textit{mnist}:} We trained our network using the \textit{mnist} dataset, which does not contain rotated digit images. Since the \textit{mnist-rot} training set has 10,000 images, to achieve equal datasets and meaningful comparisons, we randomly sample the \textit{mnist} to have 10,000 training data as well. Keeping all the parameters the same and for sparsity $0.3$, experimental results showed that on the training set our $\theta$-RBM had a lower $\gamma$-score ($\sim 0.77$) compared to TI-RBM ($\sim 0.82$). However, TI-RBM had a much lower $\gamma$-score on the testing set $\sim 0.25$, compared with our method ($\sim 0.77$), highlighting the fact that TI-RBM tends to overfit across the support set of transformations.

%\todo{0.82 on both? weird?}

\mypar{Effect of orientation estimation:} With this experiment, we want to demonstrate that even when the estimation of the dominant orientation is imperfect our system is still able to learn discriminative rotation-invariant features. To simulate this we randomly perturb the estimated orientation by adding an $\pm \epsilon$, which is an error term drawn from a certain probability distribution. Namely, since orientation is enumerated in $\mathcal{S}$ with an index $s$, that is if an image $x$ has orientation of $\varphi_s \in \mathcal{S}$, hence it has the $s$-th orientation, the $\epsilon$ affects this assignment, such that $s \leftarrow s \pm \epsilon$. In our experiments we wanted to have an $\epsilon$ of up to $n \in \mathbb{Z}$ with a certain probability $p$. Therefore we used a Bernoulli probability, such that $\epsilon \sim B(n,p)$, which indicates that $\epsilon$ can be up to $n$ with a probability of success $p$. \footnote{The sign of the error term can be drawn from an uniform distribution.} Experimental settings remained the same as previously with sparsity 0.3. In \Cref{tab:biased_est} we show the experimental results, in terms of testing error, using a RBF-SVM classifier. The SVM has trained using the same setup as reported above. Comparing these results with \Cref{tab:svm}, we see that even with large probability of error and even up to 4 errors (in index location)  the testing error is not affected (in Table 3 the $*$ denotes the values that are larger than the unperturbed value of $10.31$ in Table 1). 

\mypar{Implementation details:} We used our own implementation of Restricted Boltzmann machine, which we then used to build upon our $\theta$-RBM. Our code was written in MATLAB and runs on a machine with CUDA capabilities (NVIDIA Titan X). We download TI-RBM from the following URL: \url{https://github.com/kihyuks/icml2012_tirbm}, in order to evaluate their method with ours. Our $\theta$-RBM can be downloaded by running the following command: \textit{git clone https://bitbucket.org/stsaft/rbm.git}.

%Now, we introduce an error term on this estimation, such that with a certain probability $p$, an image may be assigned to at most $n$ the nearest orientation in $\mathcal{S}$. Namely, if $x$ has orientation $s$ corresponding to $\varphi_s$ degree, the estimation may be biased by $\pm \epsilon$, which is drawn from a certain distribution of probability.
%In our case, $\epsilon \sim B(n,p)$, that is the Bernoulli distribution of $n$ attempts with probability of success $p$. The sign of the error (either plus or minus) is drawn from an uniform distribution. In \Cref{tab:biased_est} testing error of a multi-class RBF-SVM \cite{Vapnik95} are shown. Optimal value for the classifier were found using logarithm grid search over the loss paramenter $C$ and $\sigma$ of the Gaussian kernel. For our case, best results were obtained when $C=10$ and $\sigma = 0.02$. We trained our $\theta$-RBM using sparsity target of $0.3$. The baseline testing error is $10.3\%$, given by training our network without biasing the estimation of dominant orientation.

\section{Conclusions}
\label{sec:conclusions}

\begin{table}[t]
\centering
\caption{Testing error on \textit{mnist-rot} dataset. The training was done on biased estimation of the image orientation with an error drawn from a Bernoulli distribution \textit{B(n,p)}. We reported the testing error, using different values for $n$ and $p$, namely introducing more error on the estimation of the orientation. For explanation on the $*$ see text.}
\label{tab:biased_est}
\begin{tabular}{@{}rcccc@{}}
\toprule
\multicolumn{1}{c}{\textbf{}} & \textit{p = 0.1} & \multicolumn{1}{c}{\textit{p = 0.2}} & \multicolumn{1}{c}{\textit{p = 0.3}} & \multicolumn{1}{c}{\textit{p = 0.4}} \\ \midrule
\textit{n = 1}                & 8.9\%            & 9.1\%                                      &       9.0\%                               & 9.6\%                                     \\
\textit{n = 2}                & 8.9\%            & 9.4\%                                      &      9.3\%                                &  11.0\%*                                    \\
\textit{n = 3}                & 9.0\%            & 9.9\%                                     &          10.0\%                            &      13.4\%*                                \\
\textit{n = 4}                & 9.0\%            &  10.9\%*                                    &            11.3\%*                          & 16.1\%*                                     \\ \bottomrule

\end{tabular}
\end{table}

In this paper we proposed an unfactored gated restricted Boltzmann machine to learn rotation-invariant features, which were demonstrated to also have strong discriminative power. Differently than the state of the art, our $\theta$-RBM does not  need any dataset augmentation. Moreover, it does not even require transformed versions of input images, as it is done e.g. in \cite{Memisevic2013}. The network takes as input an image and its dominant orientation, which is computed by means of histogram of gradients. The third-order tensor, which connects the three layers in the network, is updated using an extended version of the Contrastive Divergence (CD) algorithm \cite{Hinton2002}. Since each image is associated to a particular slice in the tensor, determined by its dominant orientation, the gradient of the slice is computed using the original CD. Then, the remaining slices are generated by rotated versions of the computed gradient.

Using an SVM classifier, our method had the lowest classification error, compared with the current state of the art, as shown in \Cref{tab:svm}. Moreover, we mathematically proved that our method learns rotation invariant features, which was also demonstrated experimentally in \Cref{tab:inv_meas}, using the invariance measure proposed in \cite{Rasmus2015}, over the \textit{mnist-rot} dataset \cite{Larochelle2007}. This measure ranges in $[0,1]$, where `1' means that features are fully invariant to transformations (in our case, rotations). In fact, our proof shows that our learning algorithm can reach the highest value theoretically. Furthermore, experimentally our method had an invariance score of $\sim90\%$, which is the highest amongst baseline approaches and current state of the art approaches in transformation invariant RBM \cite{Sohn2012}. 

%Future works should focus on the embedding of more affine transformations (e.g., scale), which may be estimated with another neural network, in the vision to create an end-to-end architecture such that can learn invariant features, without applying any transformations to original data.

\section*{Acknowledgements}
We thank NVIDIA corporation for providing us a GPU Titan X video card.

\bibliographystyle{abbrvnat}
\bibliography{references}
\end{document}